\documentclass[conference]{IEEEtran}
\usepackage{amsmath}
\usepackage{cite}
\usepackage{multicol}
\usepackage{multirow}
\usepackage{color}
\usepackage{algpseudocode}  
\usepackage{booktabs}       
\usepackage{bbm}
\usepackage{amsfonts}       
\usepackage{nicefrac}  
\usepackage{amsthm}
\usepackage{microtype}      
\usepackage{xcolor}         
\usepackage{amsmath}
\usepackage{mathtools}
\usepackage{booktabs}

\usepackage{geometry}
\geometry{left=1.7cm,right=1.6cm,top=1.82cm,bottom=2.58cm}
\usepackage{caption}
\usepackage[ruled,vlined]{algorithm2e}
\usepackage{graphicx}
\usepackage{array}
\usepackage{amsmath, bm} 
\usepackage{varwidth}

\newtheorem{theorem}{Theorem}

\newtheorem{lemma}[theorem]{Lemma}

\graphicspath{ {./} }
\usepackage{caption}
\usepackage{subcaption}
\usepackage{graphicx}
\graphicspath{ {./} }

\def\BibTeX{{\rm B\kern-.05em{\sc i\kern-.025em b}\kern-.08em
    T\kern-.1667em\lower.7ex\hbox{E}\kern-.125emX}}

\sloppy
\begin{document}

\title{Optimal Resource Allocation for U-Shaped Parallel Split Learning}

\author{
  \IEEEauthorblockN{Song Lyu\IEEEauthorrefmark{1}, Zheng Lin\IEEEauthorrefmark{1}, Guanqiao Qu\IEEEauthorrefmark{1}, Xianhao Chen\IEEEauthorrefmark{1}, Xiaoxia Huang\IEEEauthorrefmark{2}, and Pan Li\IEEEauthorrefmark{3}}
  \IEEEauthorblockA{\IEEEauthorrefmark{1}The Department of Electrical and Electronic Engineering, The University of Hong Kong, Pok Fu Lam, Hong Kong, China}
\IEEEauthorblockA{\IEEEauthorrefmark{2}School of Electronics and Communication Engineering, Sun Yat-sen University, Shenzhen 510275, China}
\IEEEauthorblockA{\IEEEauthorrefmark{3}The Department of Electrical, Computer, and System Engineering, Case Western Reserve University, \\Cleveland, OH 44106 USA.}
    
}


\maketitle
\begin{abstract}
Split learning (SL) has emerged as a promising approach for model training without revealing the raw data samples from the data owners. However, traditional SL inevitably leaks label privacy as the tail model (with the last layers) should be placed on the server. To overcome this limitation, one promising solution is to utilize U-shaped architecture to leave both early layers and last layers on the user side. In this paper, we develop a novel parallel U-shaped split learning and devise the optimal resource optimization scheme to improve the performance of edge networks. In the proposed framework, multiple users communicate with an edge server for SL. We analyze the end-to-end delay of each client during the training process and design an efficient resource allocation algorithm, called LSCRA, which finds the optimal computing resource allocation and split layers. Our experimental results show the effectiveness of LSCRA and that U-shaped parallel split learning can achieve a similar performance with other SL baselines while preserving label privacy.
\end{abstract}

 \begin{IEEEkeywords}
U-shaped network, split learning, label privacy, resource allocation, 5G/6G edge networks.
 \end{IEEEkeywords}

\section{Introduction}
Traditional centralized learning incurs excessive bandwidth consumption and communication latency while violating data privacy. To address this issue, edge learning, which trains models at the network edge, has emerged as a promising paradigm in 5G and beyond\cite{hou2022edge,peng2023energy,hou2021machine,peng2021leopard}. In this respect, federated edge learning (FEEL) \cite{konevcny2016federated},\cite{chen2022federated} has been shown as an effective approach that enables end devices to train models on their own devices and then aggregate the models at an edge server, thereby eliminating the need for access the raw data.

However, FEEL faces significant challenges due to the extensive client-side computing workload. For massive resource-constrained IoT devices, the limited computing power may hinder their ability to perform model training and upload large models\cite{imteaj2021survey}. To address these challenges, split learning (SL) \cite{vepakomma2018split} has emerged as an effective technique. SL splits the model into two parts: the front sub-model (head model) trained by a client and the remaining sub-model (tail model) trained by a server\cite{gupta2018distributed}. As a result, SL significantly relieves clients' computing burden by allowing a server to take over the major workload while remaining raw data on the client side~\cite{lin2023split,lin2023efficient,lin2023pushing}.

There exist several popular SL approaches. Vanilla SL has limited scalability due to its sequential training manner~\cite{vepakomma2018split}, where the model training can be shifted to the next client only when the previous client completes training. To parallize SL, parallel split learning (PSL) \cite{jeon2020privacy} enables parallel processing across the server and multiple connected clients. Furthermore, split federated learning (SFL) \cite{thapa2022splitfed} integrates federated learning (FL) into SL to allow parallel training. U-shaped split federated learning (U-SFL) \cite{yin2023predictive} combines the U-shaped architecture with the SFL framework to eliminate label sharing. Compared to PSL, the major change in SFL lies in the averaging of the client-side sub-model after its backpropagation process, following the spirit of FL, yet incurring additional communication overhead due to model exchange. The comparison of these approaches is summarized in Table I.

By preserving users' raw data, SL is often considered in privacy-sensitive applications~\cite{yang2022robust}. Nevertheless, despite the preservation of input data, the sharing of label can be a serious privacy concerns in SL, as clients have to provide the corresponding labels to help the server to calculate the loss. In some applications, label privacy is an important concern, particularly in healthcare, finance, and other sensitive domains. For example, the input data can be users' bio information/activities, and the label is the disease or health status of this user. In this case, the label is also highly sensitive and should not be shared with the server. 

To address the label privacy issue, U-shaped configurations has been proposed for SL to eliminate the need for label exchange~\cite{vepakomma2018split}. In the U-shaped SL architecture, the entire DNN is divided into three submodels: the head, body, and tail models. The head and tail models are obtained on the client side, while the body model is trained on the edge server side. This architecture effectively resolves the label privacy concern, as the output layer and the labels are retained on the client side. Although U-shaped SL has been studied under various contexts, such as medical applications~\cite{yang2022robust}, to our best knowledge, very few efforts have been made to integrate U-shaped SL into the mobile edge.

In this paper, we investigate \underline{U}-Shaped \underline{P}arallel \underline{S}plit \underline{L}earning (U-PSL) under the mobile edge computing framework. This framework parallelizes the vanilla U-shaped SL by enabling multiple clients to train with a server simultaneously. Furthermore, we develop the joint model split and resource allocation problem tailored for U-shaped SL, called LSCRA. By formulating the per-round training latency, we obtain the optimal server computing resource allocation and layer splitting strategy to address the communication and computing challenges associated with U-shaped networks, resulting in a significant reduction in training latency. Through experiments, it is found that U-PSL achieves effective label privacy protection while achieving similar or even slightly shorter latency compared to other SL benchmarks, making it a promising solution for SL in privacy-sensitive and resource-constrained wireless networks.

Our contributions are summarised as follows: 
\begin{itemize}
\item We propose U-PSL, an advanced privacy-enhancing training framework, which eliminates the need for raw data sharing and label sharing in SL.

\item We design an optimal joint computing resource allocation and layer splitting scheme to minimize per-round latency. 

\item We conduct simulations to demonstrate the effectiveness of the U-PSL framework. Our simulations show the effectiveness of the resource allocation scheme, revealing that the framework achieves test accuracy comparable to other SL approaches while preserving label privacy.
\end{itemize}

\begin{table}[t]
	\caption \centering{The Comparison of FL, SL, SFL, PSL, U-SFL, and U-PSL Frameworks}
 \centering
	\renewcommand\arraystretch{0.8}{
	\setlength{\tabcolsep}{1.3mm}{
    
	\begin{tabular}{ccccccc}
    \toprule[1.5pt] 
    Learning framework & FL & SL & SFL & PSL& U-SFL & U-PSL\\
    \midrule[1pt]
    Computation offloading & No & Yes & Yes & Yes & Yes & Yes\\
	\midrule[0.5pt]
    Parallel computing & Yes & No & Yes & Yes & Yes & Yes\\
	\midrule[0.5pt]
    Access to raw data & No & No & No & No & No & No\\
	\midrule[0.5pt]
	Model exchange & Yes & No & Yes & No & Yes & No\\
	\midrule[0.5pt]
	Label sharing & No & Yes & Yes & Yes & No & No\\
    \bottomrule[1.5pt]
    \end{tabular}}}
\end{table}

\section{System Model and U-PSL Framework}
\vspace{-0.1cm}
This section presents the U-PSL framework, which is illustrated in Figure \ref{system model}. We begin by describing a scenario of the U-PSL framework in wireless networks. Subsequently, we provide a detailed explanation of the five main steps in the U-PSL workflow. Through this section, we aim to provide an overview of the U-PSL framework and its step-by-step training procedure. Furthermore, since a shorter training time not only enables timely model usage but also reduces bandwidth and computing resource occupation, we will analyze and optimize the end-to-end latency. For the convenience of readers, we summarize the important notations in Table II.

\textbf{Architecture:} U-PSL comprises an edge server and multiple clients. On the client side, we assume that each client has an end device with computing capabilities, enabling it to execute forward propagation (FP) and backpropagation (BP) for the client-side models. Let $\mathcal U=\{1,2,...,N\}$ denote the set of clients, where $N$ is the number of participating clients. The local dataset $D_n$ owned by client $n\in \mathcal U$ is represented as $D_n= \{X_{n},Y_{n} \}$, where $X_{n}$ denotes the $n$-th client's training dataset and $Y_{n}$ is the set of the corresponding labels. $\rho_j$ and $\omega_j$ denote the computing workload of FP and BP for the first $j$ layers, respectively, $\psi_j$ represents the activation size at cut layer $j$ in the model, and $L$ denotes the total number of model layers.

%

\textbf{U-PSL Workflow:} Figure \ref{U-PSL framework} illustrates the main workflow of U-PSL, which consists of five training steps:

\subsubsection{Head model FP \& activations transmission} At the beginning of model training, the server initializes the global model and partitions it into three submodels $W_{head}$, $W_{body}$, and $W_{tail}$. $\mu^j_1=\{0,1\}$ and $\mu^j_2=\{0,1\}$ indicates the two split layers between head models and body models, and body models and tail models. 
$\mu^j_1=1$ indicates that layer $j$ is the first cut layer, and $\mu^j_2=1$ indicates that layer $j$ is the second cut layer. At the beginning of each round, each client randomly draws a mini-batch $\beta_n$ (generally, the size can be proportional to the size of $X_n$) to perform the FP process of the head model in parallel.
For simplicity, we focus on client $n$ to illustrate the operations on the client side. Let $\varphi^F_1(\overline{\mu^j_1})$ denote the computing workload of the head model’s FP process for one data sample, which is given by: 
\begin{equation}
\varphi^F_1(\overline{\mu^j_1})=\sum^L_{j=1} \mu^j_1\rho_j.
\end{equation}
\begin{figure}[t]
\centering
         \includegraphics[width=0.33\textwidth]{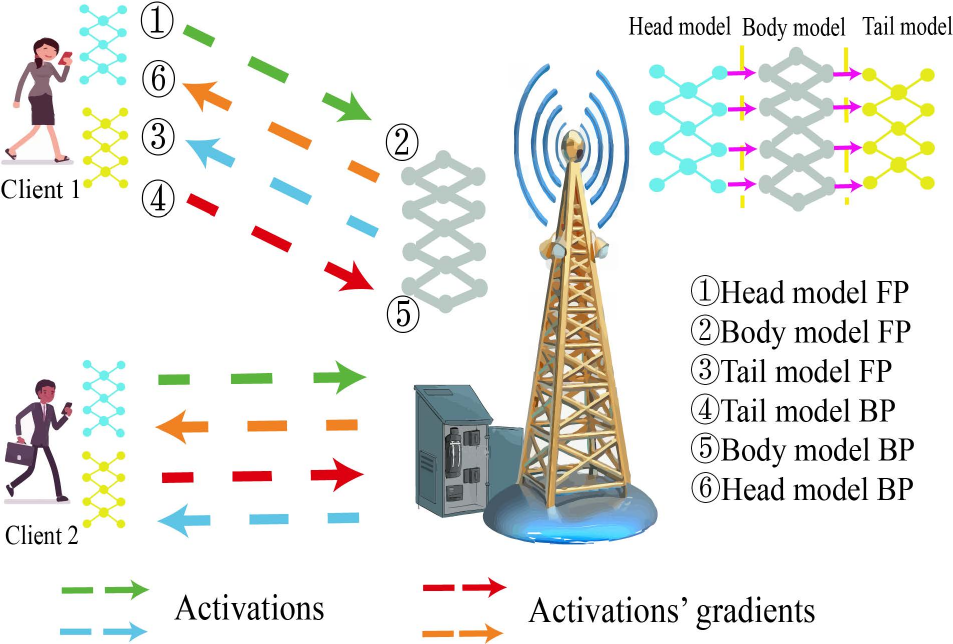}
     \caption{The illustration of U-PSL over wireless networks.}
	\label{system model}
\end{figure}

After completing the head model FP process, the first cut layer generates activations that will be taken as the input of the body model on the server. Then, the client transmits the activations to the server over a wireless channel. The data size of the activations $\Gamma(\overline{\mu^j_1})$ can be expressed as: 
\begin{equation}
\Gamma(\overline{\mu^j_1})=\sum^L_{j=1} \mu^j_1\psi_j.\label{activationsize1}
\end{equation}
\vspace{-0.1cm}

Therefore, the latency of step 1 for client $n$ can be denoted as: 
\begin{equation}
t^c_{1,n}=\frac{\beta_n\varphi^F_1(\overline{\mu^j_1})K_c}{f_n}+\frac{\beta_n\Gamma(\overline{\mu^j_1})}{R^{\uparrow}_n},
\end{equation}
where $f_n$ is the computing capability of client $n$, $K_c$ is the computing intensity of client, and $R^{\uparrow}_n$ is the upload data rate. We consider a static network where the average data rate does not change, and therefore $R^{\uparrow}_n$ is a constant value. The mobility scenarios can be left for the future research~\cite{chen2023vehicle,ding2021context,lin2022channel,lin2022tracking}.

\begin{table}[t]
	\caption{Frequently Used Notations}
	\renewcommand\arraystretch{1.1}{
	\setlength{\tabcolsep}{1mm}{
    \centering
	\begin{tabular}{ll}
    \toprule[1.5pt]
    Notation & Interpretation\\
    \midrule[1pt]
	$\mathcal U$ & The set of clients\\
    $D_n$ & Local dataset of client $n$\\
    $\beta_n$ & mini-batch size draw from client $n$'s local dataset\\
    $R^{\uparrow}_n /R^{\downarrow}_n$ & Upload/download data rate of client $n$\\
	$f_n$ & The computing capability of client $n$\\
	$f_{s,n}$ & The server-side computing resource allocated for client $n$\\
	$K_s/K_c$ & The computing intensity of server/client $n$\\
	$L$ & The total number of model layers in CNN\\
	$\rho_j$ & The computation workload (in CPU cycles) of FP for \\
 &the first $j$ layers\\
	$\omega_j$ & The computation workload (in CPU cycles) of BP for \\
 &the first $j$ layers\\
	$\psi_j$ & The size of activations (or activations' gradients) \\
 &of the cut layer $j$\\
	$F_s$ & Maximum computing capability of the server\\
	\toprule[1.5pt]
    \end{tabular}}}
\end{table}

\subsubsection{Body model FP \& activations transmission} When the server receives the activations from clients, the body model starts its FP process. $\varphi^F_s(\overline{\mu^j_1},\overline{\mu^j_2})$ denotes the computation workload of the body model’s FP process for one data sample, which can be described as:
\begin{equation}
\varphi^F_s(\overline{\mu^j_1},\overline{\mu^j_2})=\sum^L_{j=1} \mu^j_2\rho_j-\sum^L_{j=1} \mu^j_1\rho_j.
\end{equation}

After the completion of each mini-batch, the second cut layer generates activations, the size of which can be expressed as: 
\begin{equation}
\Gamma(\overline{\mu^j_2})=\sum^L_{j=1} \mu^j_2\psi_j. \label{activationsize2}
\end{equation}

Therefore, the latency of step 2 for client $n$ can be denoted as: 
\begin{equation}
t^s_{2,n}=\frac{\beta_n\varphi^F_s(\overline{\mu^j_1},\overline{\mu^j_2})K_s}{f_{s,n}}+\frac{\beta_n\Gamma(\overline{\mu^j_2})}{R^{\downarrow}_n}.
\end{equation}
where $f_{s,n}$ is the server computing resource allocation of client $n$, $K_s$ is the computing intensity of server, and $R^{\downarrow}_n$ is the download data rate.

\subsubsection{Tail model FP and BP \& activations' gradients transmission} At this stage, the client performs the rest FP process of the tail model to calculate the loss and then conducts the BP process. Let $\varphi^F_2(\overline{\mu^j_2})$  and $\varphi^B_2(\overline{\mu^j_2})$ represent
the computation workload of the tail model’s FP and BP process, respectively, which can be described as: 
\begin{gather}
\varphi^F_2(\overline{\mu^j_2})=\rho_L-\sum^L_{j=1}\mu^j_2\rho_j,\\
\varphi^B_2(\overline{\mu^j_2})=\omega_L-\sum^L_{j=1}\mu^j_2\omega_j.
\end{gather}

After finishing the BP process, each client sends activations' gradients back to the server. Given the size of the activations' gradients at the second layer $\Gamma(\overline{\mu^j_2})$ in (\ref{activationsize2}), the latency of step 3 for client $n$ can be denoted as: 
\begin{equation}
t^c_{3,n}=\frac{\beta_nK_c(\varphi^F_2(\overline{\mu^j_2})+\varphi^B_2(\overline{\mu^j_2}))}{f_n}+\frac{\beta_n\Gamma(\overline{\mu^j_2})}{R^{\uparrow}_n}.
\end{equation}
\begin{figure}[t]
\centering
         \includegraphics[width=0.42\textwidth]{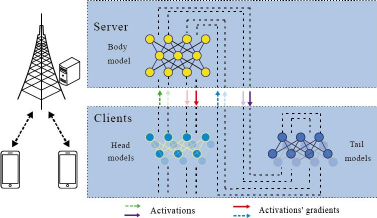}
     \caption{U-PSL Framework.}
	\label{U-PSL framework}
\end{figure}

\subsubsection{Body model BP \& activations' gradients transmission} After receiving activations’ gradients, the server performs its BP process. Let $\varphi^B_s(\overline{\mu^j_1},\overline{\mu^j_2})$ denotes the computation workload of the body model’s BP process, which can be described as: 
\begin{equation}
\varphi^B_s(\overline{\mu^j_1},\overline{\mu^j_2})=\sum^L_{j=1}\mu^j_2\omega_j-\sum^L_{j=1}\mu^j_1\omega_j.
\end{equation}
\vspace{-0.1cm}

When the body model’s BP process is completed, activations' gradients at the first cut layer will be transmitted to the corresponding clients. The size of activations' gradients is $\Gamma(\overline{\mu^j_1})$ in (\ref{activationsize1}), and therefore the latency of step 4 for client $n$ can be denoted as: 
\begin{equation}
t^s_{4,n}=\frac{\beta_n\varphi^B_s(\overline{\mu^j_1},\overline{\mu^j_2})K_s}{f_{s,n}}+\frac{\beta_n\Gamma(\overline{\mu^j_1})}{R^{\downarrow}_n}.
\end{equation}

\subsubsection{Head model BP} In this stage, the client only needs to complete the rest BP process of the head model. $\varphi^B_1(\overline{\mu^j_1})$ denotes the computation workload of the head model’s BP process, which can be described as: 
\begin{equation}
\varphi^B_1(\overline{\mu^j_1})=\sum^L_{j=1}\mu^j_1\omega_j.
\end{equation}

Therefore, the latency of step 5 for client $n$ can be denoted as: 
\begin{equation}
t^c_{5,n}=\frac{\beta_n\varphi^B_1(\overline{\mu^j_1})K_c}{f_{s,n}}.
\end{equation}

After the aforementioned steps, each sub-model updates the model parameters according to the gradients. Note that, for the body model, the server can make updates based on the averaged gradients across the clients. The per-round training latency corresponding to client $n$  can be denoted as: 
\begin{equation}
T_n=t^c_{1,n}+t^s_{2,n}+t^c_{3,n}+t^s_{4,n}+t^c_{5,n}.
\end{equation}

Let $T(f, \mu_1, \mu_2)$ denote the per-round training time. Since the aforementioned training is executed in parallel, $T(f, \mu_1, \mu_2)$ is equal to the maximum $T_n$, i.e., 
\begin{equation}
T(f, \mu_1, \mu_2)=\max_{n\in\mathcal U}  T_n.
\end{equation}

\section{Problem Formulation and Solution Approach}
As mentioned earlier, the total latency of one training round for a client is formulated. Apparently, inappropriate server computing resource allocation can lead to significant increases in training time. Additionally, the selection of cut layers also affects the overall training and communication latency. Considering these factors, we formulate the following optimization problem to minimize the per-round latency: 
\begin{align}
\mathcal{P}1:&\mathop {{\rm{min}}}\limits_{{\bm{f}},{\bm{\mu_1}},{\bm{\mu_2}}} T({\bf{f}},{\bm{\mu_1}},{\bm{\mu_2}})   \\
&\mathrm{s.t.} ~\mathrm{C1:}~\sum\limits_{j' = 1}^j {\mu_2^{j'}}\leq \sum\limits_{j' = 1}^j {\mu_1^{j'}}, \forall j \in \{1, ..., L\} \nonumber, \\
&~\mathrm{C2:}~{\mu_2^j}\in\{0,1\}, {\mu_1^j}\in\{0,1\}, \forall j \in \{1, ..., L\} \nonumber, \\
&~\mathrm{C3:}~\sum\limits_{j = 1}^L {\mu_1^j}=1, \sum\limits_{j = 1}^L {\mu_2^j}=1          \nonumber, \\
&~\mathrm{C4:}~f_{s,n} \geq 0, \forall n \in {\mathcal U} \nonumber, \\
&~\mathrm{C5:}~\sum\limits_{n = 1}^N {f_{s,n}}\leq {F_s} \nonumber.
\end{align}
where $\mathrm{C1}$ ensures that the index of the second split layer is greater than the index of the first split layer. To solve $\mathcal{P}1$, we first consider the subproblem involving computing resource allocation: 
\begin{align}
\mathcal{P}2:&\mathop {{\rm{min}}}\limits_{{\bm{f}}} T({\bf{f}})   \\
&\mathrm{s.t.} ~\mathrm{C4:}~f_{s,n} \geq 0, \forall n \in {\mathcal U} \nonumber, \\
&~\mathrm{C5:}~\sum\limits_{n = 1}^N {f_{s,n}}\leq {F_s} \nonumber.
\end{align}

We have the following lemmas for $\mathcal{P}2$.

\begin{lemma}
The optimal $\bf{f}$ for $\mathcal{P}2$ is obtained when $T_1=\cdots=T_n$.\label{lemma1}
\end{lemma}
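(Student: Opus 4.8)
The plan is to exploit two structural features of the per-client latency functions assembled in the system model. First, collecting in $T_n=t^c_{1,n}+t^s_{2,n}+t^c_{3,n}+t^s_{4,n}+t^c_{5,n}$ exactly the terms in which the server allocation appears, every occurrence of $f_{s,n}$ sits in a denominator, so each client latency takes the form $T_n(f_{s,n})=A_n+B_n/f_{s,n}$, where $A_n\ge 0$ gathers the client-side computation and all communication terms (which are independent of $\mathbf{f}$) and $B_n>0$ aggregates the server-side workload coefficients that multiply $1/f_{s,n}$. Hence each $T_n$ is continuous, strictly decreasing, and convex in $f_{s,n}$ on $(0,\infty)$, and — this is the crucial point — it depends on the decision vector $\mathbf{f}$ only through its own coordinate $f_{s,n}$; the clients are coupled solely through the budget constraint $\mathrm{C5}$.

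Before the main step I would record that $\mathrm{C5}$ is active at any optimum: if $\sum_n f_{s,n}<F_s$, adding the slack to a single client strictly decreases that client's latency by monotonicity, so $\max_n T_n$ cannot increase. We may therefore assume $\sum_n f_{s,n}=F_s$.

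The main step is a contradiction argument by resource reallocation. Suppose an optimal $\mathbf{f}^\star$ does not equalize the latencies, let $T^\star=\max_n T_n$, and let $\mathcal{M}$ be the set of maximizers. "Not equalized" forces the complementary set $\mathcal{L}=\mathcal{U}\setminus\mathcal{M}$ of strictly-below-max clients to be nonempty. Every $n\in\mathcal{L}$ satisfies $T_n<T^\star<\infty$, which forces $f_{s,n}>0$ (since $B_n>0$ would otherwise drive $T_n$ to $+\infty$). By continuity and strict monotonicity, each such $n$ can surrender a positive amount $\delta_n$ of its allocation while keeping $T_n<T^\star$, freeing a total budget $\Delta=\sum_{n\in\mathcal{L}}\delta_n>0$. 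Redistributing $\Delta$ among the clients in $\mathcal{M}$ strictly decreases each of their latencies below $T^\star$. The new allocation is still feasible (the budget $F_s$ is preserved and all coordinates stay nonnegative), yet it yields $\max_n T_n<T^\star$, contradicting optimality. Therefore all $T_n$ coincide at the optimum, which is the claim.

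I expect the main obstacle to be the case of multiple maximizers: the naive pairwise transfer that moves resource from the minimizer to one maximizer fails to lower the objective when $|\mathcal{M}|>1$, because the remaining maximizers still pin the max at $T^\star$. The reallocation above circumvents this by draining slack from the entire set $\mathcal{L}$ at once and feeding all of $\mathcal{M}$ simultaneously. A secondary point requiring care is the implicit assumption $B_n>0$ — equivalently, that each client incurs some server-side workload — which underpins both the strict monotonicity used throughout and the positivity $f_{s,n}>0$ for clients with finite latency; this holds whenever the body model is nonempty. Equivalently, one could bypass the contradiction entirely by solving $\sum_n B_n/(\tau-A_n)=F_s$ for the common latency $\tau$ and setting $f_{s,n}=B_n/(\tau-A_n)$, which exhibits the equalizing allocation directly.
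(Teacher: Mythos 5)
Your proof is correct, but it runs in the opposite logical direction from the paper's, so the two are worth contrasting. The paper anchors at the equalizing allocation $f^*_{s,n}$ (whose existence it implicitly assumes, and which its Lemma~2 later constructs via the water-filling equation) and shows any deviating allocation is worse: if some $T_m > \overline T$ the claim is trivial, and if some $T_m < \overline T$ then $f_{s,m} > f^*_{s,m}$, so the saturated budget forces some other $f_{s,n} < f^*_{s,n}$ and hence $T_n > \overline T$. You instead take an arbitrary optimum and show non-equalized latencies admit a strict improvement, draining slack from \emph{all} strictly-below-max clients and feeding \emph{all} maximizers simultaneously; as you note, this is exactly what defeats the multiple-maximizer case where a pairwise transfer fails, a case the paper never has to confront because its comparison argument never perturbs the candidate optimum. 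Each route has a tacit existence assumption: the paper needs the equalizer to exist, while you need an optimum to exist --- worth one sentence in your writeup, e.g.\ since $B_n>0$ the objective is continuous and tends to $+\infty$ as any $f_{s,n}\to 0^+$, so the minimum over the compact set $\{\mathbf{f}\geq 0,\ \sum_n f_{s,n}\leq F_s\}$ is attained in the interior. Your closing remark, solving $\sum_n B_n/(\tau-A_n)=F_s$ for the common latency $\tau$, is precisely the paper's Eq.~(18) with $B_n=\varepsilon_n$ and $A_n=T^{local}_n$ in different notation; note, however, that exhibiting the equalizer does not by itself ``bypass the contradiction entirely'' --- you would still need to observe that any feasible $\mathbf{f}$ with all $T_n<\tau$ would force $\sum_n f_{s,n}>F_s$, which is a one-line budget argument but is an argument nonetheless. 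A final simplification: your preliminary budget-activity step is dispensable, since your reallocation preserves $\sum_n f_{s,n}$ and therefore never relies on C5 being active.
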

\begin{proof}
Let $f_{s,n}=f^*_{s,n}$ be the solution that minimizes the objective while satisfying $T_1=\cdots=T_n$. Assume $T_1=\cdots=T_n=\overline T$ and therefore $T({\bf{f}})=\underset{n} {\max}T_n=\overline T$ in this case. It can be shown that $\sum\limits_{n = 1}^N {f^*_{s,n}} = {F_s}$. Otherwise, if $\sum\limits_{n = 1}^N {f^*_{s,n}} < {F_s}$, the remaining resources can be evenly allocated to every $f^*_{s,n}$, thereby reducing the objective. Supposing that there is $T_m > \overline T$, we have $T({\bf{f}}) \geq T_m > \overline T$. On the other hand, if there is $T_m < \overline T$, we have $f_{s,m}>f^*_{s,m}$. Thus, there must be $f_{s,n}<f^*_{s,n}$ since $\sum\limits_{n = 1}^N {f^*_{s,n}} = {F_s}$. Hence, we have $T_n > \overline T$, leading to $T({\bf{f}})\geq T_n > \overline T$. Therefore, only when $T_1=\cdots=T_n=\overline T$, the optimal resource allocation can be obtained. The proof is completed.
\end{proof}

\begin{lemma}
The $k$-th client with the maximum allocated computing resource $f_{s,k}$ should satisfy the equation:
\begin{equation}
\label{sum fsk expression}
\sum^N_{n=1}\frac{\varepsilon_n f_{s,k}}{\varepsilon_k+f_{s,k}(T^{local}_k-T^{local}_n)}=F_s.
\end{equation}
\end{lemma}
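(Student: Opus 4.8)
The plan is to reduce the per-round latency $T_n$ to an explicit function of the single server-side variable $f_{s,n}$, and then exploit the equal-latency characterization from Lemma~\ref{lemma1} to collapse the budget constraint into one scalar equation in a chosen reference resource $f_{s,k}$.

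First, I would separate the terms of $T_n = t^c_{1,n}+t^s_{2,n}+t^c_{3,n}+t^s_{4,n}+t^c_{5,n}$ according to their dependence on $f_{s,n}$. Every server-side computation contribution is inversely proportional to $f_{s,n}$, whereas the client-side computation delays and all uplink/downlink transmission delays are independent of $f_{s,n}$. Collecting these gives the compact form
\begin{equation}
T_n = \frac{\varepsilon_n}{f_{s,n}} + T^{local}_n,
\end{equation}
where $\varepsilon_n$ is the aggregated coefficient of $1/f_{s,n}$ (the server-side FP and BP workload scaled by $\beta_n K_s$) and $T^{local}_n$ gathers all terms independent of $f_{s,n}$.

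Next, I would invoke Lemma~\ref{lemma1}, which states that the optimum is attained when $T_1 = \cdots = T_N =: \overline{T}$ and that the budget is tight, $\sum_{n=1}^N f_{s,n} = F_s$. Setting $T_n = \overline{T}$ and solving yields $f_{s,n} = \varepsilon_n/(\overline{T} - T^{local}_n)$; note that $\overline{T} - T^{local}_n = \varepsilon_n/f_{s,n} > 0$, so every denominator is strictly positive. I would then eliminate the common value $\overline{T}$ in favour of the reference client $k$ by writing $\overline{T} = \varepsilon_k/f_{s,k} + T^{local}_k$, substituting, and multiplying through by $f_{s,k}$ to obtain
\begin{equation}
f_{s,n} = \frac{\varepsilon_n f_{s,k}}{\varepsilon_k + f_{s,k}(T^{local}_k - T^{local}_n)}.
\end{equation}
Summing over $n$ and imposing $\sum_{n=1}^N f_{s,n} = F_s$ produces exactly (\ref{sum fsk expression}).

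The substitution itself is routine; the only point requiring care is the legitimacy of the single-variable reduction. Since $\overline{T} - T^{local}_n > 0$ for every $n$, each denominator equals $f_{s,k}(\overline{T} - T^{local}_n) > 0$, so the identity is well posed for any choice of reference client, and selecting the one with the maximal allocation $f_{s,k}$ is what makes the resulting scalar equation convenient to solve. Concretely, I expect the main thing to confirm beyond the algebra is that the left-hand side of (\ref{sum fsk expression}) is monotone in $f_{s,k}$ over the admissible range—each summand has positive derivative $\varepsilon_n\varepsilon_k/(\varepsilon_k+f_{s,k}(T^{local}_k-T^{local}_n))^2$—so that the equation determines $f_{s,k}$ uniquely and can be solved by a one-dimensional search, from which all remaining $f_{s,n}$ are recovered.
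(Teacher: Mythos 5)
Your proof is correct and matches the paper's own argument essentially step for step: the same decomposition $T_n = T^{local}_n + \varepsilon_n/f_{s,n}$, the same appeal to Lemma~\ref{lemma1} for latency equalization and the tight budget $\sum_{n=1}^N f_{s,n} = F_s$, the same elimination yielding $f_{s,n} = \varepsilon_n f_{s,k}/\bigl(\varepsilon_k + f_{s,k}(T^{local}_k - T^{local}_n)\bigr)$, followed by summation; your closing monotonicity computation is exactly the observation the paper records after the lemma to justify the bisection. One hair-splitting difference: the paper fixes $k$ as the client with the largest $T^{local}_n$ so that the denominators remain positive for \emph{every} candidate $f_{s,k}$ probed during the search (guaranteeing C4 along the way), whereas your positivity argument via $\overline{T} - T^{local}_n = \varepsilon_n/f_{s,n} > 0$ certifies the identity only at the optimum---sufficient for the lemma as stated, but worth noting if you intend the equation to drive the one-dimensional search.
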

\begin{proof}
When $\mu^j_1$ and $\mu^j_2$ are fixed, each training epoch latency can be described as $T_n = T^{local}_n + \frac{\varepsilon_n}{f_{s,n}}$, where $\varepsilon_n = \beta_nK_s (\sum^L_{j=1}\mu^j_2\rho_j+\sum^L_{j=1}\mu^j_2\omega_j-\sum^L_{j=1}\mu^j_1\rho_j-\sum^L_{j=1}\mu^j_1\omega_j)$ denotes the server-side computing workload, and $T^{local}_n$ is a constant representing client's local computing and communication latency. 

For client set $\mathcal U$, by enforcing $T_1=\cdots=T_k=T_n=\overline T, k\in \mathcal U$, the equation can be given as
\begin{equation}
T^{local}_n+\frac{\varepsilon_n}{f_{s,n}}=T^{local}_k+\frac{\varepsilon_k}{f_{s,k}}, \forall {k, n}\in {\mathcal U}.
\end{equation}
Therefore, to achieve equal per-round training time, we have
\begin{equation}
f_{s,n}=\frac{\varepsilon_n f_{s,k}}{\varepsilon_k+f_{s,k}\left(T^{local}_k-T^{local}_n\right)}.\label{fsn expression}
\end{equation}

To satisfy C4 in $\mathcal{P}2$, the selected $k$-th client should be the one with the maximum $T^{local}_n$ to ensure $f_{s,n}$ is nonnegative. Besides, as discussed in Lemma \ref{lemma1}, $\sum^N_{n=1}f_{s,n}=F_s$ holds for the optimal solution. By considering (\ref{fsn expression}), we have
\begin{equation}
\sum^N_{n=1}\frac{\varepsilon_n f_{s,k}}{\varepsilon_k+f_{s,k}(T^{local}_k-T^{local}_n)}=F_s.
\end{equation}
\end{proof}
\vspace{-0.4cm}

We observe that Eq.  (\ref{sum fsk expression}) exhibits a monotonically increasing behavior with respect to $f_{s,k}$. Taking advantage of this property, we can employ a bisection procedure to efficiently find $f_{s,k}$ from (\ref{sum fsk expression}). Then, the optimal $f_{s,n}$ for other clients can be directly obtained from (\ref{fsn expression}).

After obtaining the optimal server computing resource allocation scheme, the remaining task involves making split-layer decisions. This subproblem can be formulated as: 
\begin{align}
\mathcal{P}3:&\mathop {{\rm{min}}}\limits_{{\bm{\mu_1}},{\bm{\mu_2}}} T({\bm{\mu_1}},{\bm{\mu_2}})   \\
&\mathrm{s.t.} ~\mathrm{C1:}~\sum\limits_{j' = 1}^j {\mu_2^{j'}}\leq \sum\limits_{j' = 1}^j {\mu_1^{j'}}, \forall j \in \{1, ..., L\} \nonumber, \\
&~\mathrm{C2:}~{\mu_2^j}\in\{0,1\}, {\mu_1^j}\in\{0,1\}, \forall j \in \{1, ..., L\} \nonumber, \\
&~\mathrm{C3:}~\sum\limits_{j = 1}^L {\mu_1^j}=1, \sum\limits_{j = 1}^L {\mu_2^j}=1          \nonumber. 
\end{align}
\vspace{-0.07cm}
~~$\mathcal{P}3$ is a standard mixed integer linear programming (MILP) problem. Since the number of CNN model layers is typically not very large, we can directly use an exhaustive search algorithm to calculate the minimum $T_n$ and obtain $\mu^j_1$ and $\mu^j_2$. 

Finally, our proposed scheme, termed Layer Splitting and Computing Resource Allocation (LSCRA), conducts exhaustive search to ensure that all possible pairs of split layers are explored. Then, with each pair, we solve the optimal resource allocation based on bisection procedure from Eq.  (\ref{sum fsk expression}) to find the minimum delay attained. It is easy to see that LSCRA can obtain the optimal solution to $\mathcal{P}1$, and the computational complexity is $O(L^2 {log}{F_s})$.


\begin{figure}[t]
\centering
\begin{minipage}{0.21\textwidth}
\centering
\includegraphics[width=\textwidth]{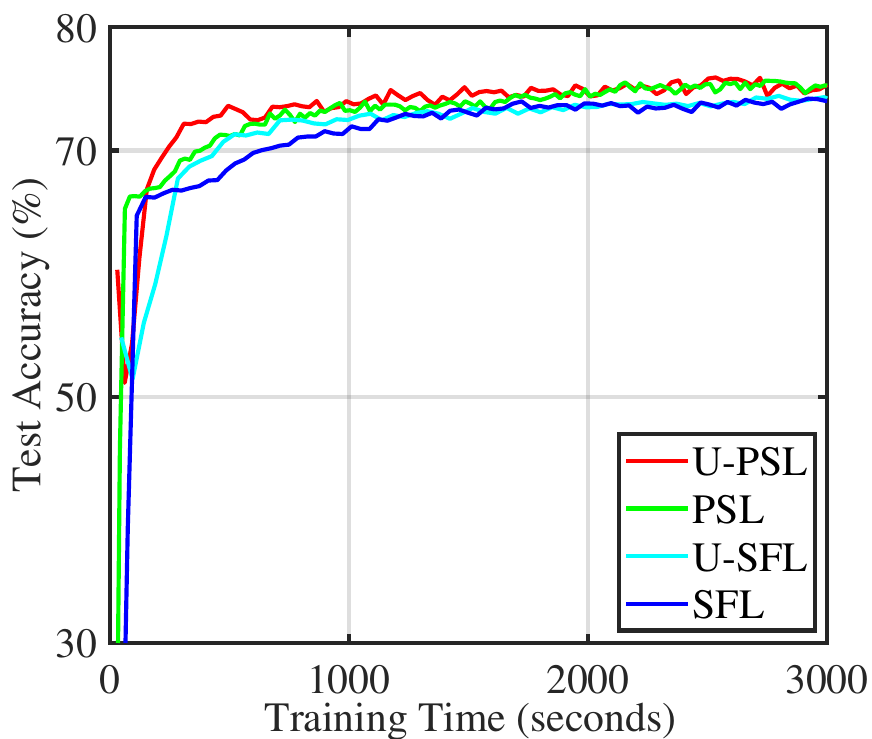}
\caption*{(a) HAM10000 under IID setting}
\end{minipage}\hfill
\begin{minipage}{0.21\textwidth}
\centering
\includegraphics[width=\textwidth]{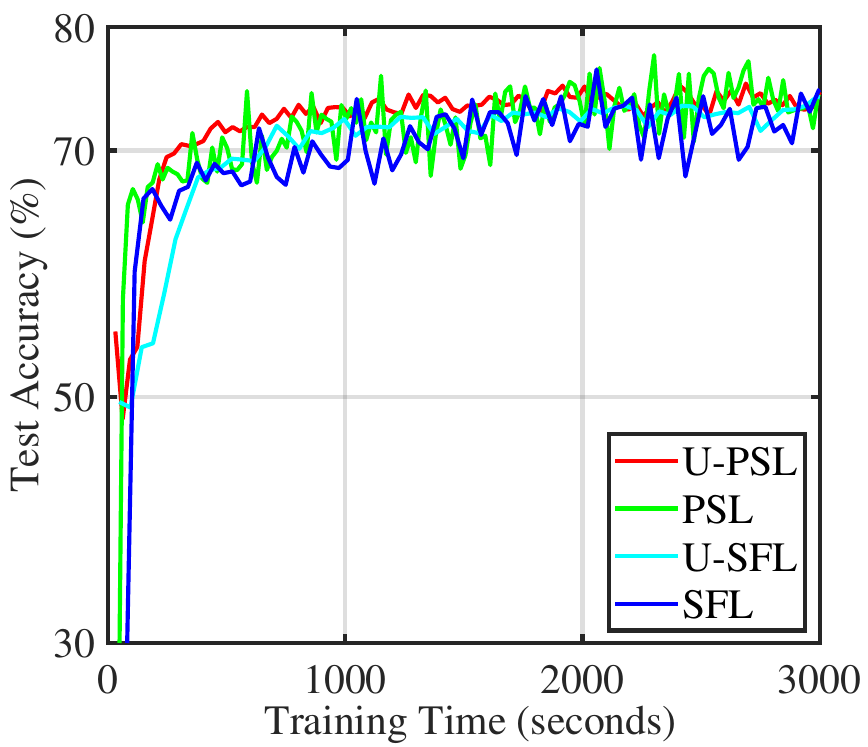}
\caption*{(b) HAM10000 under non-IID setting}
\end{minipage}\hfill
\begin{minipage}{0.21\textwidth}
\centering
\includegraphics[width=\textwidth]{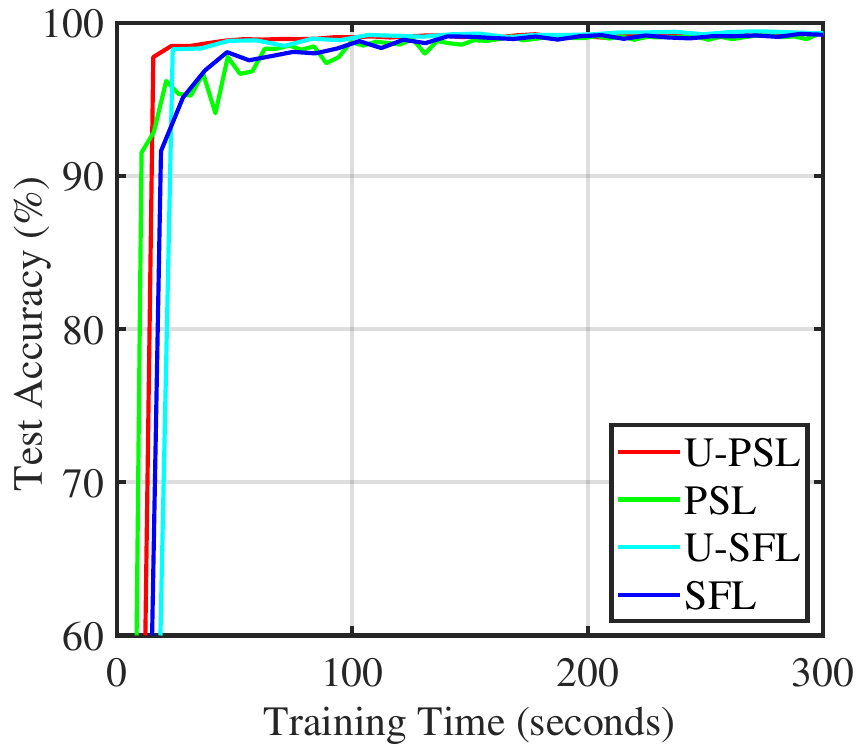}
\caption*{(c) MNIST under IID setting
\newline}
\end{minipage}\hfill
\begin{minipage}{0.21\textwidth}
\centering
\includegraphics[width=0.99\textwidth]{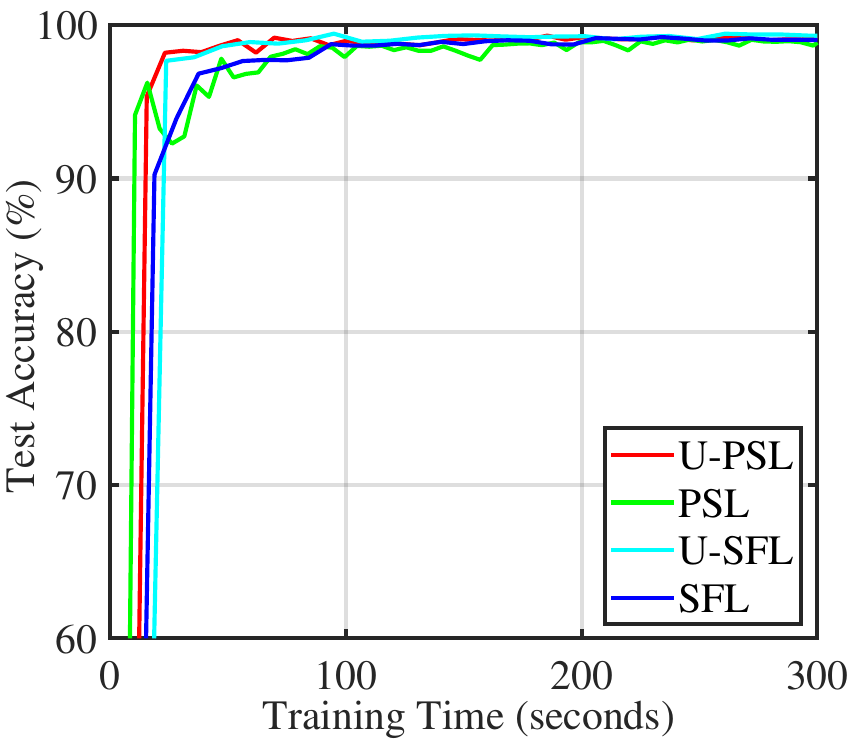}
\caption*{(d) MNIST under non-IID setting}
\end{minipage}
\vspace{-0.3cm}
\caption{Test accuracy of U-PSL, PSL, U-SFL, SFL on HAM10000 \& MNIST dataset under IID/non-IID setting with $N=5$, $F_s=50$GHz.}
\label{test accuracy}
\end{figure}

\section{Simulation Results}
This section provides the numerical results to evaluate the learning performance of the proposed U-PSL framework and the effectiveness of the LSCRA algorithm and split layers strategy. 

\subsection{Experiments Settings}
In the simulations, we consider $N$ clients randomly distributed around a wireless edge server. The computing capability of each client is uniformly distributed within $[0.5, 1.5]$ GHz, and the computing capability of the server is set to $[10, 50]$ GHz. The uplink data rate of each client is uniformly distributed within $[5, 30]$ Mbps, and the downlink data rate is set to $[2,10] \times R^\uparrow_n$ Mbps. Other parameters can be found in Table III.

We evaluate the learning performance of the proposed U-PSL framework by deploying the ResNet-18 network on two image classification datasets, HAM10000\cite{tschandl2018ham10000} and MNIST\cite{lecun1998gradient}. Furthermore, we conduct experiments under IID (independent and identically distributed) and non-IID data settings. 

\begin{table}[h]
  \centering
  \caption{Parameter Settings}
  \renewcommand{\arraystretch}{1.25}{
  \setlength{\tabcolsep}{0.5mm}{
\begin{tabular}{|c|c|c|c|}
\hline
\textbf{Parameter}          & \textbf{value} & \textbf{Parameter} & \textbf{value}  \\ \hline
$F_s$             & $[10, 50]$GHz              & $f_n$                 & $[0.5, 1.5]$GHz \\ \hline
$N$             & [5, 100]              & $\beta_n$           & 64                    \\ \hline
$K_s$               & $\frac{1}{32}$cycles/FLOPs              & $K_c$                  & $\frac{1}{16}$cycles/FLOPs                        \\ \hline
$R^\uparrow_n$        & $[5, 30]$Mbps            & $R^\downarrow_n$             & $[2, 10]\times{R^\uparrow_n}$Mbps                       \\ \hline
\end{tabular}}}
\end{table}
\vspace{-0.2cm}

\subsection{Performance Evaluation of the Proposed U-PSL Framework}
In this subsection, we assess the performance of the proposed U-PSL framework in terms of test accuracy, convergence speed, training latency, and privacy preservation. We compare U-PSL with other distributed learning baselines, including PSL, SFL, and U-SFL, to examine the effectiveness of U-PSL. For fair comparison, the benchmark schemes also adopt optimal split layers and server computing resource allocation.

Figure \ref{test accuracy} demonstrates the test accuracy of these frameworks on the HAM10000 and MNIST datasets. It can be observed that U-PSL achieves a similar test accuracy compared to SFL, U-SFL and PSL as the models converge. Moreover, in some situations (e.g., Figure \ref{test accuracy}(a)), U-PSL requires the lowest time budget to reach a target accuracy. There are two reasons for this: One is that the client-side submodels in U-PSL are trained by user-specific data. Therefore, the client-side submodel may adapt better to user data in the early stages and perform better in terms of accuracy. The other is that U-PSL eliminates the need for model exchange between the clients and the server, reducing communication overhead and resulting in faster convergence compared to U-SFL and SFL.

\begin{figure}[t]
\centering
         \includegraphics[width=0.47\textwidth]{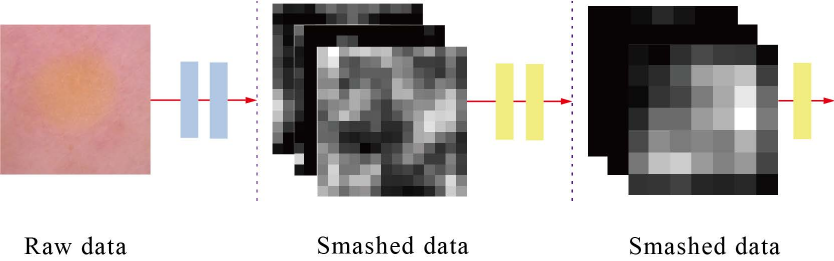}
     \caption{Smashed data visualization.}
	\label{smashed data}
\end{figure}

Figure \ref{smashed data} illustrates the use of a raw image from HAM10000 to generate smashed data at the first and second cut layers, which are located after the skip connection of the third and fourth residual blocks in Resnet-18, respectively. From the visualization, the outputs significantly differ from the raw data. Also, it is hard to identify the label. In summary, \textit{the U-PSL framework achieves both data and label privacy protection while achieving similar or even better performance compared to other benchmarks. }

\subsection{Performance Evaluation of the Proposed LSCRA algorithm}
In this subsection, we evaluate the performance of the proposed LSCRA scheme with respect to the server computing capacity and the number of service clients. We compare the proposed method with two benchmarks: 
\begin{itemize}
\item \textbf{Benchmark a)}: \emph{Optimal split layers \& evenly allocated}, where the server and clients have the same cut layers as the proposed scheme, and the server computing resource is evenly allocated. 
\item\textbf{Benchmark b)}: \emph{Suboptimal split layers \& evenly allocated}, where the cut layers are set to the second performing case, and the server computing resource is evenly allocated.
\end{itemize}

Figure \ref{server computing capacity} illustrates the performance of the per-round training latency with respect to the server computing capacity. It can be observed that when the server's computing capacity is limited, the proposed scheme significantly reduces the training latency for each round. This is achieved by allocating more server computing resources to devices with weaker computing power and communication conditions. 

Furthermore, when the server's computing capacity ranges from $10$ GHz to $50$ GHz, the proposed scheme ensures that the training time for each round does not decrease significantly. This is because, in scenarios where the server's computing capacity is sufficiently powerful, the communication time and the local training time of clients become the dominant factors. However, even in such cases, our method outperforms benchmark b), by finding the optimal split layers. This phenomenon demonstrates the importance of carefully selecting splitting layers and allocating computing resources. In a nutshell, our method reduces the training latency with varied computing capabilities, particularly in scenarios where the resources on the server are limited. 

\begin{figure}[t]
\centering
         \includegraphics[width=0.21\textwidth]{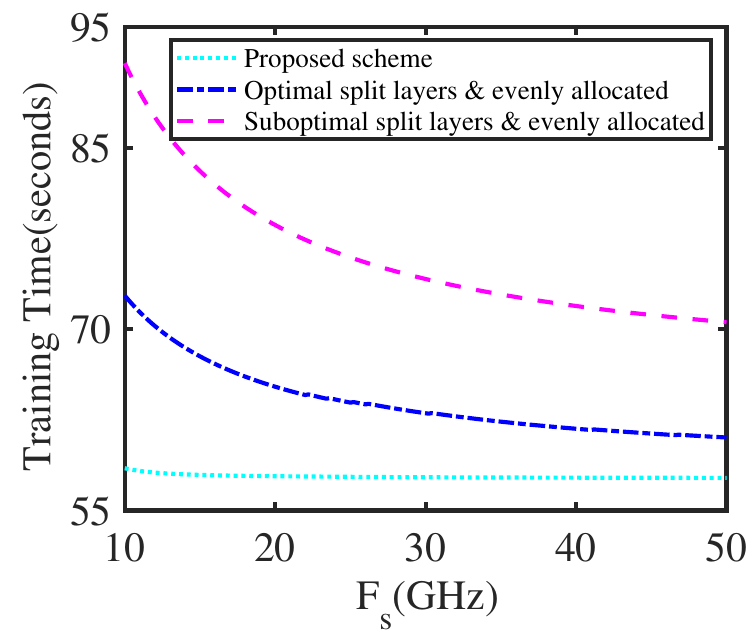}
     \caption{The performance for per-round training latency versus the server computing capacity with $[10, 50]$ GHz, $N=100$.}
\label{server computing capacity}
\end{figure}

\begin{figure}[t]
\centering
         \includegraphics[width=0.21\textwidth]{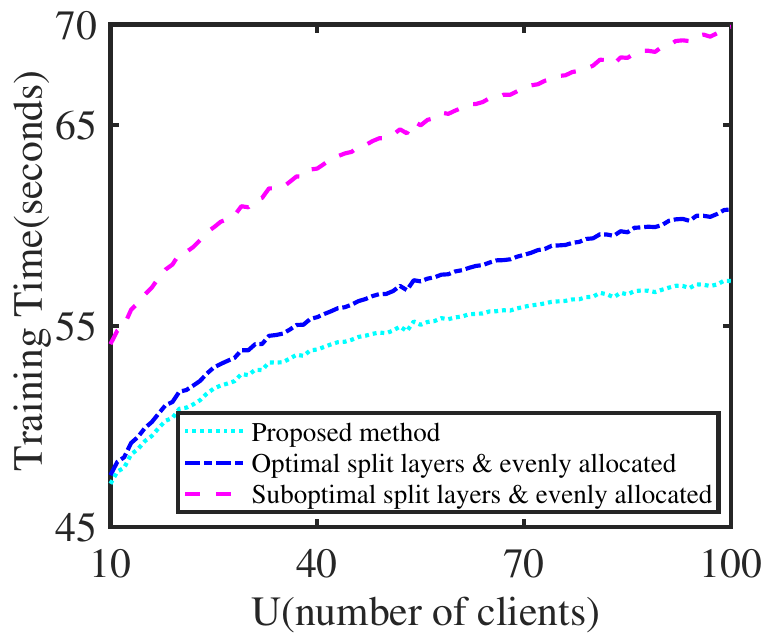}
     \caption{The performance for per-round training latency versus the number of clients from 10 to 100, $F_s=50$ GHz}
\label{number of clients}
\end{figure}

Figure \ref{number of clients} illustrates the performance of the per-round training latency with respect to the number of clients. As the number of clients increases, the time cost for each round associated with the two benchmarks shows a greater increase compared to our proposed scheme. This scenario aligns with real-world communication scenarios where a single server serves a large number of users.
\vspace{-0.1cm}
\section{Conclusions}
\vspace{-0.1cm}
In this paper, we proposed a novel split learning framework called U-Shaped Parallel Split Learning (U-PSL) to address model and label privacy preservation. By taking into account the additional communication overhead introduced by the U-shaped neural network, we have designed an effective resource allocation and layer splitting strategy to minimize the latency of U-PSL over wireless edge networks. Simulation results demonstrate that our proposed U-PSL framework retains a similar accuracy compared to existing SL benchmarks while preserving label privacy. Our results show the effectiveness and efficiency of adopting U-shaped SL at wireless edge networks. For the future work, we plan to derive the convergence results for our scheme and consider the joint optimization of computing resources and channel allocation for U-shaped PSL.

\vspace{-0.15cm}
\section{Acknowledgment}
\vspace{-0.1cm}
The work of X. Chen was supported in part by HKU IDS Research Seed Fund under grant IDS-RSF2023-0012. The work of X. Huang was supported by Joint Funds of NSFC under grant U22A2003.
\bibliographystyle{IEEEtran}
\bibliography{mybib}

\begin{thebibliography}{10}
\providecommand{\url}[1]{#1}
\csname url@samestyle\endcsname
\providecommand{\newblock}{\relax}
\providecommand{\bibinfo}[2]{#2}
\providecommand{\BIBentrySTDinterwordspacing}{\spaceskip=0pt\relax}
\providecommand{\BIBentryALTinterwordstretchfactor}{4}
\providecommand{\BIBentryALTinterwordspacing}{\spaceskip=\fontdimen2\font plus
\BIBentryALTinterwordstretchfactor\fontdimen3\font minus \fontdimen4\font\relax}
\providecommand{\BIBforeignlanguage}[2]{{%
\expandafter\ifx\csname l@#1\endcsname\relax
\typeout{** WARNING: IEEEtran.bst: No hyphenation pattern has been}%
\typeout{** loaded for the language `#1'. Using the pattern for}%
\typeout{** the default language instead.}%
\else
\language=\csname l@#1\endcsname
\fi
#2}}
\providecommand{\BIBdecl}{\relax}
\BIBdecl

\bibitem{hou2022edge}
X.~Hou, J.~Wang, Z.~Fang, Y.~Ren, K.-C. Chen, and L.~Hanzo, ``Edge intelligence for mission-critical {6G} services in space-air-ground integrated networks,'' \emph{{IEEE} Netw.}, vol.~36, no.~2, pp. 181--189, 2022.

\bibitem{peng2023energy}
H.~Peng and L.-C. Wang, ``{Energy Harvesting Reconfigurable Intelligent Surface for UAV Based on Robust Deep Reinforcement Learning},'' \emph{IEEE Trans. Wireless Commun.}, 2023.

\bibitem{hou2021machine}
X.~Hou, J.~Wang, Z.~Fang, X.~Zhang, S.~Song, X.~Zhang, and Y.~Ren, ``{Machine-learning-aided Mission-critical Internet of Underwater Things},'' \emph{{IEEE} Netw.}, vol.~35, no.~4, pp. 160--166, 2021.

\bibitem{peng2021leopard}
H.~Peng, A.-H. Tsai, L.-C. Wang, and Z.~Han, ``{LEOPARD: Parallel Optimal Deep Echo State Network Prediction Improves Service Coverage for UAV-Assisted Outdoor Hotspots},'' \emph{IEEE Trans. Cogn. Commun. Netw.}, vol.~8, no.~1, pp. 282--295, 2021.

\bibitem{konevcny2016federated}
J.~Kone{\v{c}}n{\`y}, H.~B. McMahan, F.~X. Yu, P.~Richt{\'a}rik, A.~T. Suresh, and D.~Bacon, ``{Federated Learning: Strategies For Improving Communication Efficiency},'' \emph{arXiv preprint arXiv:1610.05492}, 2016.

\bibitem{chen2022federated}
X.~Chen, G.~Zhu, Y.~Deng, and Y.~Fang, ``{Federated Learning over Multihop Wireless Networks with In-Network Aggregation},'' \emph{IEEE Trans. Wirel. Commun.}, vol.~21, no.~6, pp. 4622--4634, 2022.

\bibitem{imteaj2021survey}
A.~Imteaj, U.~Thakker, S.~Wang, J.~Li, and M.~H. Amini, ``{A Survey on Federated Learning for Resource-constrained IoT Devices},'' \emph{IEEE Internet Things J.}, vol.~9, no.~1, pp. 1--24, 2021.

\bibitem{vepakomma2018split}
P.~Vepakomma, O.~Gupta, T.~Swedish, and R.~Raskar, ``{Split Learning For Health: Distributed Deep Learning Without Sharing Raw Patient Data},'' \emph{arXiv preprint arXiv:1812.00564}, 2018.

\bibitem{gupta2018distributed}
O.~Gupta and R.~Raskar, ``{Distributed Learning of Deep Neural Network over Multiple Agents},'' \emph{J. Netw. Comput. Appl.}, vol. 116, pp. 1--8, 2018.

\bibitem{lin2023split}
Z.~Lin, G.~Qu, X.~Chen, and K.~Huang, ``{Split Learning in 6G Edge Networks},'' \emph{arXiv preprint arXiv:2306.12194}, 2023.

\bibitem{lin2023efficient}
Z.~Lin, G.~Zhu, Y.~Deng, X.~Chen, Y.~Gao, K.~Huang, and Y.~Fang, ``{Efficient Parallel Split Learning over Resource-constrained Wireless Edge Networks},'' \emph{arXiv preprint arXiv:2303.15991}, 2023.

\bibitem{lin2023pushing}
Z.~Lin, G.~Qu, Q.~Chen, X.~Chen, Z.~Chen, and K.~Huang, ``{Pushing Large Language Models to the 6G Edge: Vision, Challenges, and Opportunities},'' \emph{arXiv preprint arXiv:2309.16739}, 2023.

\bibitem{jeon2020privacy}
J.~Jeon and J.~Kim, ``{Privacy-sensitive Parallel Split Learning},'' in \emph{Proc. ICOIN}, 2020.

\bibitem{thapa2022splitfed}
C.~Thapa, P.~C.~M. Arachchige, S.~Camtepe, and L.~Sun, ``{Splitfed: When Federated Learning Meets Split Learning},'' in \emph{Proc. AAAI}, 2022.

\bibitem{yin2023predictive}
B.~Yin, Z.~Chen, and M.~Tao, ``Predictive gan-powered multi-objective optimization for hybrid federated split learning,'' \emph{{IEEE} Trans. Commun.}, 2023.

\bibitem{yang2022robust}
Z.~Yang, Y.~Chen, H.~Huangfu, M.~Ran, H.~Wang, X.~Li, and Y.~Zhang, ``{Robust Split Federated Learning for U-shaped Medical Image Networks},'' \emph{arXiv preprint arXiv:2212.06378}, 2022.

\bibitem{chen2023vehicle}
X.~Chen, Y.~Deng, H.~Ding, G.~Qu, H.~Zhang, P.~Li, and Y.~Fang, ``Vehicle as a service ({VaaS}): Leverage vehicles to build service networks and capabilities for smart cities,'' \emph{arXiv preprint arXiv:2304.11397}, 2023.

\bibitem{ding2021context}
H.~Ding and K.~G. Shin, ``Context-aware beam tracking for {5G} mmwave {V2I} communications,'' \emph{{IEEE} Trans. Mobile Comput.}, vol.~22, no.~6, pp. 3257 -- 3269, June 2023.

\bibitem{lin2022channel}
Z.~Lin, L.~Wang, J.~Ding, B.~Tan, and S.~Jin, ``{Channel Power Gain Estimation for Terahertz Vehicle-to-infrastructure Networks},'' \emph{IEEE Commun. Lett.}, vol.~27, no.~1, pp. 155--159, 2022.

\bibitem{lin2022tracking}
Z.~Lin, L.~Wang, J.~Ding, Y.~Xu, and B.~Tan, ``{Tracking and Transmission Design in Terahertz V2I Networks},'' \emph{IEEE Trans. Wireless Commun.}, 2022.

\bibitem{tschandl2018ham10000}
P.~Tschandl, C.~Rosendahl, and H.~Kittler, ``{The HAM10000 Dataset, A Large Collection of Multi-source Dermatoscopic Images of Common Pigmented Skin Lesions},'' \emph{Sci. Data}, vol.~5, no.~1, pp. 1--9, 2018.

\bibitem{lecun1998gradient}
Y.~LeCun, L.~Bottou, Y.~Bengio, and P.~Haffner, ``{Gradient-based Learning Applied to Document Recognition},'' \emph{Proc IEEE Inst Electr Electron Eng}, vol.~86, no.~11, pp. 2278--2324, 1998.

\end{thebibliography}

\end{document}